\documentclass[letterpaper, 10 pt, conference]{ieeeconf}

\IEEEoverridecommandlockouts
\usepackage{graphics}
\usepackage{epsfig}
\usepackage{times}
\usepackage{amsmath}
\usepackage{amssymb}
\usepackage{xcolor}
\usepackage{xspace}
\usepackage{booktabs}

\newcommand{\algoname}{\texttt{FAITH}}
\newcommand{\algo}{\algoname}

\definecolor{MaterialBlue900}{HTML}{0D47A1}
\definecolor{MaterialPurple900}{HTML}{4A148C}
\newcommand{\ourname}{\textcolor{MaterialBlue900}{\algo{}}}
\newcommand{\baselinename}[1]{\textcolor{MaterialPurple900}{\texttt{#1}}}

\definecolor{plgrey}{HTML}{888888}

\definecolor{gtsetgrey}{HTML}{797979}
\newcommand{\gtset}[1]{{\color{gtsetgrey}#1}}

\definecolor{obstred}{HTML}{D03B3B}

\newcommand{\R}{\mathbb{R}}

\newcommand{\stt}{s}
\newcommand{\act}{a}

\newcommand{\rew}{r}

\newcommand{\E}{\mathbb{E}}

\newcommand{\harm}{h}

\newcommand{\cbfAlpha}{\alpha}

\DeclareMathOperator*{\argmin}{arg\,min}

\DeclareMathOperator{\clip}{clip}
\newcommand{\abs}[1]{\left\lvert #1 \right\rvert}

\providecommand{\coloneqq}{\mathrel{\vcenter{\hbox{$:$}}}\!=}
\newcommand{\relu}[1]{\left[ #1 \right]_{+}}

\newtheorem{lemma}{Lemma}
\newtheorem{remark}{Remark}

\usepackage{cleveref}

\crefname{equation}{}{}
\Crefname{equation}{Equation}{Equations}
\crefname{problem}{Problem}{Problems}
\crefname{definition}{Definition}{Definitions}
\crefname{assumption}{Assumption}{Assumptions}
\crefname{proposition}{Proposition}{Propositions}
\crefname{theorem}{Theorem}{Theorems}
\crefname{lemma}{Lemma}{Lemmas}
\crefname{remark}{Remark}{Remarks}
\crefname{section}{Section}{Sections}
\crefname{subsection}{Section}{Sections}
\crefname{figure}{Fig.}{Figs.}
\Crefname{figure}{Fig.}{Figs.}
\crefname{table}{Table}{Tables}
\Crefname{table}{Table}{Tables}

\makeatletter
\let\@IEEEorigmakecaption\@makecaption
\long\def\@makecaption#1#2{\ifx\@captype\@IEEEtablestring
    \setbox\@tempboxa\hbox{\footnotesize #1.~~ #2}\ifdim \wd\@tempboxa >\hsize
      \setbox\@tempboxa\hbox{\footnotesize #1.~~ }\parbox[t]{\hsize}{\footnotesize\noindent\unhbox\@tempboxa #2}\else
      \hbox to\hsize{\footnotesize\hfil\box\@tempboxa\hfil}\fi
    \@IEEEtablecaptionsepspace
  \else
    \@IEEEorigmakecaption{#1}{#2}\fi}
\makeatother

\title{\LARGE \bf
FAITH: Feasibility-Aware Safety-Filtered RL for \\High-Dimensional Systems
}

\author{Songyuan Zhang$^{1,2,*}$, Baljeet Singh$^{2}$, Sarthak Ranjeet Kaingade$^{2}$, Chuchu Fan$^{1}$, and Bryan Trinh$^{2}$\thanks{$^{*}$ Work done while interning at Amazon.}\\[0.3em]
{$^{1}$Massachusetts Institute of Technology \quad $^{2}$Amazon}}

\makeatletter
\newcommand{\ADDTEASERFIGURE}{\begin{minipage}{\textwidth}
    \centering
    \includegraphics[width=\textwidth]{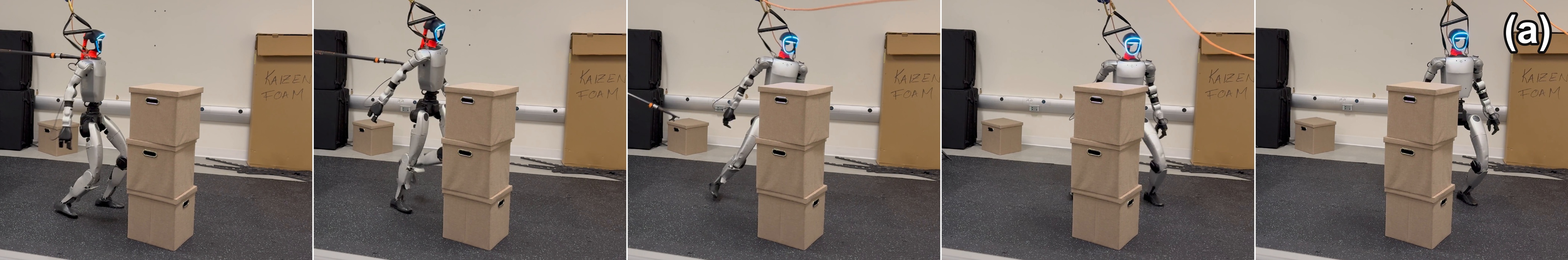}\\[2pt]
    \includegraphics[width=\textwidth]{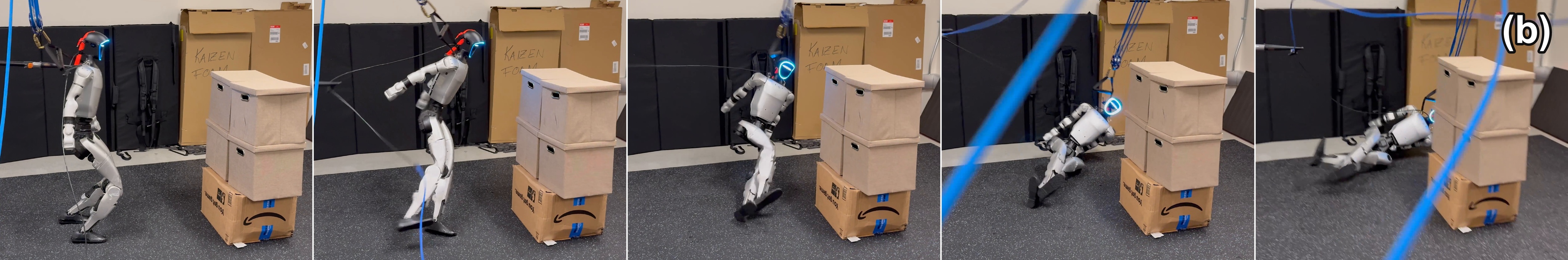}
    \setcounter{figure}{0}\refstepcounter{figure}\def\@captype{figure}\@makecaption{\fnum@figure}{\textbf{\algo{} on a $29$-DoF humanoid under a push, keeping clear of a person represented by the boxes.} {(a)} Under a light push, the robot stabilizes while staying clear of the human. {(b)} Under a strong push, the robot sacrifices balance, falls away from the human, and protects the head and hands.}\label{fig:teaser}
  \end{minipage}}
\AddToHook{cmd/@maketitle/after}{\ADDTEASERFIGURE}
\makeatother

\begin{document}

\maketitle
\thispagestyle{empty}
\pagestyle{empty}

\begin{abstract}

Safe reinforcement learning commonly places safety and task performance in the same policy objective, where they can introduce competing updates. Safety filters separate them at action execution, but classical designs require an analytic safety function and dynamics model, and standard minimal-intervention filters are myopic to long-horizon task return because they minimize only instantaneous action deviation. Hard projections are also undefined when no safe action exists. We present \algo{}, a feasibility-aware, model-free framework that approximates the optimal state-action safety value and amortizes minimal-intervention filtering with a feedforward network. The task policy optimizes the task return through the filtered dynamics, which recovers the feasible constrained problem without a competing safety term in the task-policy update. When no action satisfies the learned safety condition, the same filter approaches the action with minimum predicted peak harm. On a double integrator example and a Safety Gym environment, \algo{} achieves the highest return among methods with no feasible-start violations and matches the lowest harm from infeasible starts. On a $29$-DoF humanoid, it reaches a $99.95\%$ safety rate while retaining $97\%$ of the unfiltered return in Walking-Avoid, and obtains the highest measured safety rate in Push-Avoid by learning to sacrifice balancing and fall away from the protected region. The same policies are also demonstrated on a real-world Unitree G1 humanoid.

\end{abstract}

\section{Introduction}
\label{sec:intro}

Reinforcement learning (RL) has enabled agile behaviors for high-dimensional legged and humanoid robots \cite{he2024agile,yang2025shield,chen2025dexterous}.
Deploying these behaviors around people or equipment, however, requires keeping every robot link outside protected regions.
This requirement is difficult for contact-rich, underactuated systems because safety depends not only on the current geometry, but also on whether future actions can prevent a violation eventually.
A strong disturbance can also make violation unavoidable, and therefore, a controller should maximize task return safely when safety is feasible and only minimize harm otherwise.

Feasibility-aware RL uses reachability-based safety values to estimate whether a safe future trajectory exists \cite{fisac2019bridging,hsu2021safety}, and guides policy optimization toward harm reduction when safety is infeasible, and return maximization otherwise \cite{yu2022reachability,ganai2023respo,zheng2024safe}.
Safety requirements are often enforced within policy optimization through a penalty, a multiplier, or a constraint-reduction update \cite{ray2019benchmarking,xu2021crpo}.
However, task and safety signals act on the policy itself in all cases.
When return improvement and harm reduction favor different actions, their policy updates can oppose one another.
For weighted penalty methods, a small safety weight can permit violations, whereas a large weight can make the policy conservative.
Primal-dual methods adapt the weight, but replace manual tuning with an unstable saddle-point problem \cite{xu2021crpo}.
Thus, even when the formulation distinguishes the feasibility of states, safety competes with performance inside policy optimization.

Safety filters offer a natural separation by correcting a nominal action generated by a task policy before it reaches the system \cite{ames2016control,wabersich2023data}.
Classical filters typically require an analytic safety function and a dynamics model, then solve an optimization problem at every control step.
Learning-based methods can remove the explicit model, but they usually retain online action sampling or optimization \cite{oh2025safety,kim2026deep}, which becomes difficult as the action dimension grows.
In addition, a fixed task policy cannot adapt to the effect of safety interventions on long-horizon return.
Some methods address this limitation by training the task policy through the filter, allowing it to adapt to safety interventions \cite{bejarano2024safety,yang2025cbf,oh2026provably}.
A separate difficulty arises when no admissible action exists: hard projection is then undefined and does not by itself specify how to minimize harm.

Considering these challenges, we propose \textbf{F}easibility-\textbf{A}ware safety-f\textbf{I}l\textbf{T}ered RL for \textbf{H}igh-dimensional systems (\algo{}), a model-free framework that combines learned safety value functions, feedforward action filtering, and task-policy training through the filter.
\algo{} approximates the optimal state-action safety value using a safety critic and a backup actor, which guides a feedforward filter to approximate the minimal action correction.
The task policy is trained \textit{through} the filter using the unmodified task reward.
Safety therefore changes the dynamics experienced by the policy but does not appear as a competing term in its \textit{objective}.
When no action satisfies the learned acceptance condition, the same filter prioritizes the action with minimum peak harm, without a feasibility classifier, a mode switch, or a separate recovery controller.
At deployment, control requires only forward passes through the task policy and safety filter.
To summarize our contributions,
\begin{itemize}
  \item We introduce \algo{}, which approximates the optimal safety critic and trains the task policy through a learned feedforward safety filter.
  Under a perfect filter and global policy optimization, \algo{} recovers the constrained optimum on feasible states, without a competing safety term in the task-policy objective.
  \item In infeasible states, the same controller approaches the action with minimum predicted peak harm, without a mode switch or fallback policy.
  \item Experiments on a double integrator, Safety Gym, and a $29$-DoF humanoid demonstrate high task performance and safety in feasible settings, and reduced harm in infeasible settings.
  Qualitative demonstrations also show the learned controller operating on a physical humanoid.
\end{itemize}

\section{Problem Setting and Preliminaries}
\label{sec:prelim}

\subsection{Feasibility-Aware Safe Reinforcement Learning}
\label{sec:prelim:mdp}

We consider a discrete-time system $s^{k+1}=f(s^k,a^k)$ with state $s\in\mathcal S$, action $a\in\mathcal A$, and \textit{unknown} deterministic dynamics $f$. A reward $r:\mathcal S\times\mathcal A\to\mathbb R$ specifies the task. For a policy $\pi:\mathcal S\to\mathcal A$, define its discounted return as\footnote{Trajectories under $\pi$ satisfy $s^{k+1}=f(s^k,\pi(s^k))$. We omit these dynamics when they are clear from context.}
\begin{equation}
\label{eq:reward-value}
V_r^\pi(s)\coloneqq\sum_{k=0}^{\infty}\gamma^k r(s^k,\pi(s^k)),
\qquad s^0=s,
\end{equation}
where $\gamma\in[0,1)$ and the return is assumed finite. The task objective for a parameterized policy $\pi_\theta$ is then
\begin{equation}
\label{eq:task-obj}
J(\theta)\coloneqq\mathbb E_{s^0\sim\rho_0}\big[V_r^{\pi_\theta}(s^0)\big],
\end{equation}
where $\rho_0$ is the initial-state distribution.

Safety is specified by a signed \textit{harm} function $h:\mathcal S\to\mathbb R$. The unsafe set is then defined as $\mathcal S_u=\{s\in\mathcal S:h(s)>0\}$, with positive values measuring violation severity. For trajectories starting at $s$, define the policy's \textit{peak harm} and the \textit{optimal safety value} \cite{fisac2019bridging} as
\begin{equation}
\label{eq:safety-value}
V_h^\pi(s)\coloneqq\sup_{k\geq0}h(s^k),
\qquad V_h^\star(s)\coloneqq\min_\pi V_h^\pi(s).
\end{equation}
We define the \textit{feasible set} accordingly as $\mathcal F=\{s\in\mathcal S:V_h^\star(s)\leq0\}$. Thus, $s\in\mathcal F$ when \textit{some} policy avoids $\mathcal S_u$ indefinitely. Note that because of the dynamics and actuation limits, $h(s)\leq0$ alone does not imply $s\in\mathcal F$.

Following \cite{yu2022reachability,zheng2024safe}, the ideal feasibility-aware problem distinguishes two regimes for each initial state $s^0$:
\begin{subequations}\label{eq:two-regime}
\begin{alignat}{2}
\max_\pi\ &V_r^\pi(s^0)\quad\text{s.t.}\ V_h^\pi(s^0)\leq0,
&\quad&\text{if }s^0\in\mathcal F,\label{eq:regime-feasible}\\
\min_\pi\ &V_h^\pi(s^0),
&\quad&\text{if }s^0\notin\mathcal F.\label{eq:regime-infeasible}
\end{alignat}
\end{subequations}
Inside $\mathcal F$, we maximize return subject to safety. Outside it, violation is unavoidable, so we minimize peak harm.

\subsection{Enforcing Safety in the Objective}
\label{sec:prelim:lagrangian}

To relate existing methods to \cref{eq:two-regime}, we distinguish whether they enforce safety within task-policy optimization or on the executed action.
Objective-based methods incorporate constraint satisfaction into policy optimization. \textit{Penalty} methods replace the reward by $r_\beta(s,a)=r(s,a)-\beta[h(s)]_+$ for $\beta>0$ and $[\cdot]_+\coloneqq\max\{\cdot,0\}$. \textit{Primal-dual} methods \cite{yu2022reachability} introduce Lagrange multipliers $\lambda$ and jointly optimize $\lambda$ with the policy, introducing the saddle-point problem $\min_{\lambda\geq0}\max_\theta\,
\mathbb E_{s\sim\rho_0}\big[V_r^{\pi_\theta}(s)-\lambda(s)V_h^{\pi_\theta}(s)\big]$.
For infeasible states, this formulation recovers \eqref{eq:two-regime} as $\lambda\to\infty$.
\textit{Primal} methods \cite{xu2021crpo} instead alternate between reward improvement and constraint reduction.

In all three approaches, safety and performance compete \textit{within task-policy optimization}.
Penalty and primal-dual methods weight reward improvement against constraint reduction, while primal methods alternate policy updates between them.
When the two objectives favor opposing update directions, improving safety can counteract task progress.

\subsection{Enforcing Safety on the Action}
\label{sec:prelim:filter}

A \textit{safety filter} instead enforces safety on executed actions, leaving the task objective unchanged \cite{wabersich2023data}. Let $B:\mathcal S\times\mathcal A\to\mathbb R$ encode the safety with $B(s,a)\leq0$ defining admissible actions. Given a nominal action $a_{\mathrm{nom}}=\pi_{\mathrm{task}}(s)$ proposed by a task policy, the \textit{minimal-intervention} filter selects
\begin{equation}
\label{eq:min-intervention}
F(s,a_{\mathrm{nom}})\in\argmin_{a\in\mathcal A}\|a-a_{\mathrm{nom}}\|^2
\quad\text{s.t.}\ B(s,a)\leq0.
\end{equation}
A valid $B$ ensures that starting from $s^0\in\mathcal{F}$, repeatedly applying admissible actions keeps the system in $\mathcal{F}$.

A common approach constructs $B$ from a control barrier function (CBF) and a known continuous-time control-affine model, so \cref{eq:min-intervention} becomes a CBF-based quadratic program \cite{ames2016control}. While constructing a valid CBF is nontrivial, subsequent work learns CBFs \cite{so2024train} or constructs them from reduced-order models \cite{cohen2024safety,yang2025cbf}. Learned state-action safety values also enable filtering without an explicit dynamics model \cite{oh2025safety}.

However, this separation leaves three issues. First, minimizing instantaneous action deviation does not optimize task return. A fixed nominal policy can therefore be suboptimal after filtering. Second, a learned nonlinear $B$ can make online optimization or sampling costly in high-dimensional action spaces \cite{oh2025safety}. Third, if no action satisfies the constraint, the projection does not define a least-harm action.

\section{Method}
\label{sec:method}

To address these limitations, we propose \algo{}.
\algo{} approximates the optimal safety critic with a backup actor supplying low-harm actions for the Bellman target. A learned filter uses this critic to approximate the solution to \eqref{eq:min-intervention} when the learned safety condition is feasible and to prioritize lower harm otherwise. The task policy is trained from filtered rollouts using \textit{only} reward, so safety does not appear as a competing objective in policy updates. \algo{} requires neither known dynamics nor a given barrier, and deployment uses only the task policy and filter. We summarize \algo{} in \Cref{fig:overview} and elaborate its components below.

\definecolor{figtaskblue}{HTML}{1668C4}
\definecolor{figfilterpurple}{HTML}{7A2FBF}
\definecolor{figcriticred}{HTML}{B65047}
\definecolor{figbackupgreen}{HTML}{158568}
\begin{figure*}[t]
  \centering
  \includegraphics[width=\textwidth]{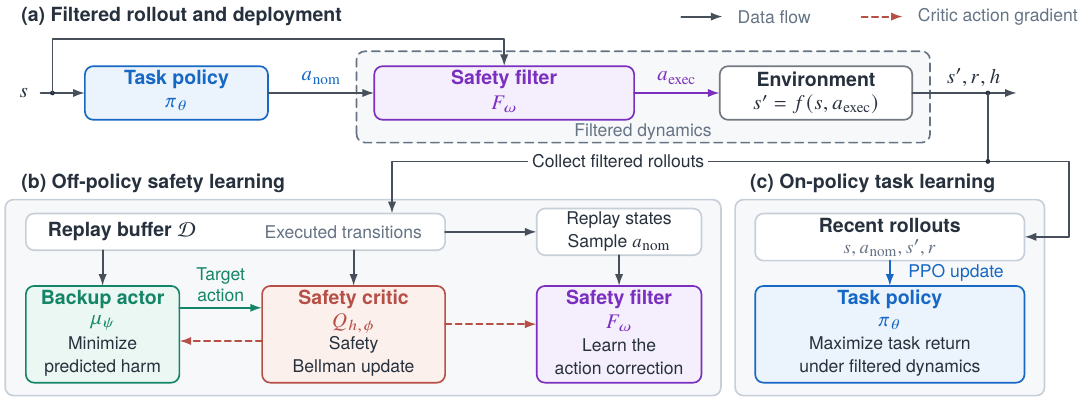}
  \setlength{\abovecaptionskip}{-15pt}
  \caption{\textbf{\algo{} separates safety learning from task-return optimization.} {(a)} The \textcolor{figtaskblue}{task policy} proposes a nominal action, and the \textcolor{figfilterpurple}{safety filter} produces the executed action. The dashed box denotes the filtered dynamics seen by the \textcolor{figtaskblue}{task policy}. {(b)} The \textcolor{figcriticred}{safety critic}, \textcolor{figbackupgreen}{backup actor}, and \textcolor{figfilterpurple}{safety filter} learn from replay. The \textcolor{figbackupgreen}{backup actor} supplies actions for the \textcolor{figcriticred}{safety critic's} Bellman target, while \textcolor{figcriticred}{critic gradients} guide the \textcolor{figbackupgreen}{backup actor} and \textcolor{figfilterpurple}{safety filter}. {(c)} PPO updates the \textcolor{figtaskblue}{task policy} using nominal-action likelihood ratios and filtered-rollout returns. Only the \textcolor{figtaskblue}{task policy} and \textcolor{figfilterpurple}{safety filter} are evaluated for deployment.}
  \label{fig:overview}
\end{figure*}

\subsection{Safety $Q$-Function}
\label{sec:method:critic}

A state value identifies feasible states but cannot evaluate candidate actions without accessing the dynamics. We therefore consider the optimal safety critic \cite{oh2025safety}
\begin{equation}
\label{eq:qh-def}
Q_h^\star(s,a)\coloneqq\min_\pi\sup_{k\geq0}h(s^k),
\qquad s^0=s,\ a^0=a,
\end{equation}
where $a^k=\pi(s^k)$ for $k\geq1$. An action $a$ admits a persistently safe trajectory from $s$ if $Q_h^\star(s,a)\leq0$, and $V_h^\star(s)=\min_{a\in\mathcal A}Q_h^\star(s,a)$ recovers the optimal safety value in \cref{eq:safety-value}. The optimal safety critic satisfies the Bellman equation
\begin{equation}
Q_h^\star(s,a)=\max\!\left\{h(s),\min_{a'\in\mathcal A}Q_h^\star(f(s,a),a')\right\}.
\end{equation}
Let $Q_{h,\phi}$ be its neural approximation. We approximate the $\min_{a'\in\mathcal A}$ with a backup actor $\mu_\psi:\mathcal S\to\mathcal A$, trained by
\begin{equation}
\label{eq:mu-loss}
\mathcal L_\mu(\psi)=\mathbb E_{s\sim\mathcal D}
\left[Q_{h,\phi}\big(s,\mu_\psi(s)\big)\right],
\end{equation}
with samples from the replay buffer $\mathcal D$.
The backup actor supplies the target action and the critic is fitted with
\begin{gather}
y=(1-\gamma_h)h(s)+\gamma_h\max\!\left\{h(s),Q_{h,\bar\phi}\big(s',\mu_\psi(s')\big)\right\},
\label{eq:ra-bellman}\\
\mathcal L_Q(\phi)=\mathbb E_{\mathcal D}
\left[\big(Q_{h,\phi}(s,a)-y\big)^2\right],
\label{eq:critic-loss}
\end{gather}
where $\gamma_h\in[0,1)$ is the safety discount and $Q_{h,\bar\phi}$ is a target critic updated by Polyak averaging. This update targets the discounted safety Bellman fixed point \cite{fisac2019bridging,hsu2021safety}, which can differ from $Q_h^\star$ when $\gamma_h<1$.
Consequently, together with approximation errors, this makes $Q_{h,\phi}(s,a)\leq0$ a learned practical safety estimate instead of a formal certificate. Practically, we can use ensembles \cite{ghasemipour2022so} and target policy smoothing \cite{fujimoto2018addressing} to improve the accuracy of $Q_{h,\phi}$. We sample from a large replay buffer $\mathcal D$ to retain unsafe and near-boundary transitions as the task policy changes,\footnote{In the humanoid experiments, we also penalize deviations of the backup action from replayed executed actions to reduce extrapolation error \cite{levine2020offline}.} and will show its construction in \Cref{sec:method:policy}.

\subsection{Feedforward Safety Filter}
\label{sec:method:filter}

The learned critic defines the action-acceptance condition $Q_{h,\phi}(s,a)+\delta\leq0$, where $\delta\geq0$ is a margin. However, solving the corresponding nonlinear projection online can be expensive in a high-dimensional action space. We instead amortize the projection with a feedforward residual network,
\begin{equation}
\label{eq:filter}
a_{\mathrm{exec}}=F_\omega(s,a_{\mathrm{nom}})
\coloneqq\clip_{\mathcal A}\!\left(a_{\mathrm{nom}}+\Delta_\omega(s,a_{\mathrm{nom}})\right),
\end{equation}
where $a_{\mathrm{nom}}$ is a proposed action and $a_{\mathrm{exec}}$ is applied to the system. Let $\nu$ denote the training distribution formed from replay states and a mixture of replayed and randomly sampled nominal actions. We train the filter with
\begingroup
\thinmuskip=2mu \medmuskip=1mu \thickmuskip=2mu
\renewcommand{\mintagsep}{2pt}
\begin{gather}
\label{eq:filter-loss}
\mathcal L_F(\omega)=\mathbb E_\nu\big[\|a_{\mathrm{exec}}-a_{\mathrm{nom}}\|^2+\lambda[Q_{h,\phi}(s,a_{\mathrm{exec}})+\delta]_+\big],
\end{gather}
\endgroup
where $(s, a_\mathrm{nom})\sim\nu$.
We initialize $F_\omega$ near the identity and increase $\lambda$ during training up to $\lambda_\mathrm{max} \gg 1$, so the filter prioritizes safety as training proceeds.
Let $\mathcal K_\delta(s)\coloneqq\{a\in\mathcal A:Q_{h,\phi}(s,a)+\delta\leq0\}$ be the learned admissible-action set. Eq.~\eqref {eq:filter-loss} is a penalty relaxation of projection onto $\mathcal K_\delta(s)$. If $a_{\mathrm{nom}}\in\mathcal K_\delta(s)$, the ideal filter preserves it. Otherwise, $F$ prioritizes satisfaction of the learned constraint, while the deviation term selects a minimally modified action. The parameterized filter approximates this objective over $\nu$.

Eq.~\eqref {eq:filter-loss} also defines the filter when its learned acceptance condition is infeasible. If $Q_{h,\phi}(s,a)+\delta>0$ for all $a\in\mathcal A$, the hinge is active throughout $\mathcal A$, and Eq.~\eqref {eq:filter-loss} reduces to
$\|a-a_{\mathrm{nom}}\|^2+\lambda Q_{h,\phi}(s,a)$.
Increasing $\lambda$ therefore prioritizes lower predicted peak harm without switching to a separate recovery controller. More precisely, assume that $\mathcal A$ is compact, $Q_{h,\phi}(s,\cdot)$ is continuous, and the per-input loss is minimized globally. If $a_\lambda$ minimizes Eq.~\eqref {eq:filter-loss} and $a_Q\in\argmin_{a\in\mathcal A}Q_{h,\phi}(s,a)$, then
\begin{equation}
0\leq Q_{h,\phi}(s,a_\lambda)-Q_{h,\phi}(s,a_Q)
\leq\operatorname{diam}(\mathcal A)^2/\lambda.
\end{equation}
The bound shows that a global per-input minimizer approaches the minimum of $Q_{h,\phi}(s,\cdot)$ at rate $\mathcal O(1/\lambda)$. For the learned filter, it holds up to learning errors.
In this way, Eq.~\eqref {eq:filter-loss} remains well-defined regardless of feasibility.

\subsection{Task Policy and Training Loop}
\label{sec:method:policy}

The last component of \algo{} is the \textit{task} policy $\pi_\theta$, which proposes $a_{\mathrm{nom}}\sim\pi_\theta(\cdot| s)$, and the filter returns $a_{\mathrm{exec}}=F_\omega(s,a_{\mathrm{nom}})$. For a fixed filter, these maps define the filtered dynamics and reward
\begin{equation}
\label{eq:filtered-dyn}
\begin{aligned}
\tilde f_\omega(s,a_{\mathrm{nom}})
&\coloneqq f\big(s,F_\omega(s,a_{\mathrm{nom}})\big),\\
\tilde r_\omega(s,a_{\mathrm{nom}})
&\coloneqq r\big(s,F_\omega(s,a_{\mathrm{nom}})\big).
\end{aligned}
\end{equation}
The task policy maximizes return in this filtered system,
\begin{equation}
\label{eq:filtered-objective}
J_\omega(\theta)\coloneqq
\mathbb E_{\rho_0,\pi_\theta,\tilde f_\omega}
\left[\sum_{k=0}^{\infty}\gamma^k
\tilde r_\omega(s^k,a_{\mathrm{nom}}^k)\right].
\end{equation}
We optimize \cref{eq:filtered-objective} with PPO \cite{schulman2017proximal}, holding $\omega$ fixed during each policy update.
Safety affects the trajectory through the filter, but no safety term is added to the task reward.

\begin{lemma}[Equivalence under a perfect filter]
\label{prop:ideal-equivalence}
Let $V_{r,F}^{\pi}$ be the return under \cref{eq:filtered-dyn}. Suppose $F$ is perfect on $\mathcal F$, meaning $f(s,F(s,a))\in\mathcal F$ for all $(s,a)\in\mathcal F\times\mathcal A$, and $F(s,a)=a$ whenever $f(s,a)\in\mathcal F$. Under the conditions of \cite[Thm.~1]{oh2026provably},
\begin{equation}
\sup_{\pi}V_{r,F}^{\pi}(s)
=\sup_{\pi:\,V_h^{\pi}(s)\leq0}V_r^{\pi}(s),
\qquad \forall s\in\mathcal F.
\end{equation}
Hence, globally optimizing \cref{eq:filtered-objective} over a policy class containing an optimizer recovers \cref{eq:regime-feasible}.
\end{lemma}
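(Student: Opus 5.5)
We prove the two inequalities separately by mapping policies between the filtered and unfiltered systems. Fix $s\in\mathcal F$. Throughout we use the forward invariance supplied by perfection of $F$. If $s^k\in\mathcal F$, then $s^{k+1}=f(s^k,F(s^k,a^k))\in\mathcal F$ for every nominal action $a^k$. By induction, every filtered trajectory from $s$ stays in $\mathcal F$.

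\emph{Step 1 ($\leq$).} Take any task policy $\pi$ acting in the filtered system, and define the executed policy $\tilde\pi(x)\coloneqq F(x,\pi(x))$. With deterministic dynamics, the trajectory of $\tilde\pi$ under $f$ coincides with that of $\pi$ under $\tilde f$. The rewards also coincide, so $V_r^{\tilde\pi}(s)=V_{r,F}^{\pi}(s)$. All visited states lie in $\mathcal F$, and $\mathcal F\subseteq\{h\leq0\}$ because $h(x)\leq V_h^\star(x)\leq0$ for $x\in\mathcal F$. Hence $V_h^{\tilde\pi}(s)=\sup_k h(s^k)\leq0$, so $\tilde\pi$ is feasible for the constrained problem. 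Taking the supremum over $\pi$ gives $\sup_\pi V_{r,F}^\pi(s)\leq\sup_{\pi:V_h^\pi(s)\leq0}V_r^\pi(s)$.

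\emph{Step 2 ($\geq$).} Take any $\pi$ with $V_h^\pi(s)\leq0$, and use $\pi$ itself as the nominal policy in the filtered system. We need that the filter never intervenes along the trajectory, i.e.\ $f(s^k,\pi(s^k))\in\mathcal F$ for all $k$. The tail of a safe trajectory is itself safe, so each $s^{k+1}$ satisfies $V_h^\star(s^{k+1})\leq V_h^{\pi}(s^{k+1})\leq0$, i.e.\ $s^{k+1}\in\mathcal F$. The identity clause $F(x,a)=a$ whenever $f(x,a)\in\mathcal F$ then gives $F(s^k,\pi(s^k))=\pi(s^k)$ for every $k$, by induction along the trajectory. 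The filtered and unfiltered rollouts therefore coincide, and $V_{r,F}^\pi(s)=V_r^\pi(s)$. Taking suprema gives the reverse inequality.

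\emph{Main obstacle.} The expected difficulty lies in the hypotheses hidden in ``the conditions of \cite[Thm.~1]{oh2026provably}.'' One concern is stationarity versus history dependence: the tail argument treats $\pi$ restricted to the future trajectory as a policy from $s^{k+1}$. This is valid for stationary deterministic feedback policies. If stochastic policies are allowed, we would rely on that theorem's conditions, or an almost-sure version of the argument, to keep Step 2 valid. A second concern is attainment of the $\min$ in the definition of $V_h^\star$ and finiteness of returns, which we import from the cited conditions rather than reprove. With these in hand, the final sentence follows. If the policy class contains a maximizer $\pi^\star$ of \cref{eq:filtered-objective}, then $\pi^\star$ is pointwise optimal on $\mathcal F$ (assuming a common optimizer over the support of $\rho_0$, again from the cited conditions). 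By Step 1 its executed policy $\tilde\pi^\star$ is feasible and attains the constrained supremum, which is exactly \cref{eq:regime-feasible}.
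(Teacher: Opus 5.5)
Your proof is correct, but it takes a different route from the paper: the paper gives no argument of its own and simply declares the identity a direct application of \cite[Thm.~1]{oh2026provably} to the filtered dynamics \cref{eq:filtered-dyn}. You instead derive it from the definitions using two policy maps.
\begin{itemize}
\item The map $\pi\mapsto F(\cdot,\pi(\cdot))$ turns any filtered policy into a constrained-feasible policy with the same return. This uses forward invariance from the first perfection clause together with $\mathcal F\subseteq\{h\le 0\}$, which you correctly obtain from $h\le V_h^\star$.
\item Any constrained-feasible policy passes through the filter unchanged. Sub-trajectories of a safe trajectory are safe, so each successor lies in $\mathcal F$ and the identity clause applies.
\end{itemize}

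What your route buys is transparency. In the paper's setting of deterministic $f$ and stationary deterministic $\pi:\mathcal S\to\mathcal A$, the displayed equality needs nothing beyond the two perfection clauses. Whatever extra hypotheses the citation carries therefore matter only for the final sentence, i.e.\ for the existence of one policy in the class that is optimal from ($\rho_0$-almost) every initial state. That is natural here, since the filtered system is an ordinary unconstrained MDP. Note that maximizing the averaged objective \cref{eq:filtered-objective} yields optimality only $\rho_0$-almost everywhere, not pointwise on all of $\mathcal F$; that is the sense in which \cref{eq:regime-feasible} is recovered.

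Your worry about attainment of the minimum in $V_h^\star$ is also unnecessary. Both inequalities you use, $h\le V_h^\star$ and $V_h^\star\le V_h^\pi$, hold equally for an infimum.

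What the citation buys is brevity and coverage of the generality actually used in training, notably stochastic PPO policies $a_{\mathrm{nom}}\sim\pi_\theta(\cdot|s)$. For those, your Step 2 would be run on the probability-one event that the trajectory is safe, exactly as you anticipate.
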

\textit{Proof.} This is a direct application of \cite[Thm.~1]{oh2026provably} to \cref{eq:filtered-dyn}. \hfill$\square$

\begin{remark}
Exact projection onto $\{a:Q_h^\star(s,a)\leq0\}$ yields a perfect filter. Our learned filter only approximates it, so \cref{prop:ideal-equivalence} is not a guarantee for the deployed controller.
\end{remark}

The task policy is updated from on-policy rollouts, while the safety critic, backup actor, and filter are trained off-policy from the replay buffer $\mathcal D$.
Retaining earlier unsafe and near-boundary transitions preserves safety-relevant coverage as the task policy changes and mitigates catastrophic forgetting \cite{zhang2025dgppo}.
We also execute the backup policy $\mu_\psi$ with some probability on some trajectories to avoid out-of-distribution (OOD) $Q_{h,\phi}$ values. These trajectories are only stored in the replay buffer and never used for the PPO update.

\begin{remark}
\label{rem:training-interpretation}

\algo{} is structurally different from objective-based safe RL methods. Those methods optimize return and safety with the \textit{same} policy, so policy updates must reconcile their competing directions. \algo{} optimizes safety and task return in different modules: \cref{eq:filter-loss} updates $F_\omega$ for safety, while PPO updates $\pi_\theta$ for return \cref{eq:filtered-objective}. Thus, no safety term competes with return in the task-policy objective, and training through the filter lets the policy adapt to interventions.

\end{remark}

\section{Related Work}
\label{sec:related}

\textbf{Feasibility-aware safe RL.} Hamilton-Jacobi reachability characterizes states from which future violations can be avoided \cite{fisac2019bridging,wabersich2023data}. Neural reachability solvers approximate the underlying dynamic program, while RL formulations estimate its value from sampled transitions \cite{bansal2021deepreach,feng2025bridging,fisac2019bridging,hsu2021safety}. Subsequent work extends these formulations to stabilize-avoid objectives and unknown feasibility \cite{so2023solving,so2026solving}. Feasibility-aware safe RL then incorporates the learned value into task-policy optimization through multipliers, reachability-dependent objectives, or feasibility-guided generation \cite{yu2022reachability,ganai2023respo,zheng2024safe}. We adopt the same two-regime problem but enforce safety on actions through a correction network.
The task policy optimizes only return under filtered dynamics, without a competing safety objective in its update.

\textbf{Safety filters.} Safety filters decouple task control from safety enforcement by correcting nominal actions before actuation \cite{ames2016control,wabersich2023data}.
To construct a valid filter, learning-based methods either learn the filter function $B$ and retain online filtering \cite{robey2020learning,so2024train,oh2026synthesis,thananjeyan2021recovery,hsu2023isaacs,oh2025safety,kim2026deep}, or learn the correction directly or train the task policy under filtered dynamics \cite{yu2022towards,bejarano2024safety,yang2025cbf,yun2026provably}. \algo{} combines these directions by approximating the optimal state-action peak-harm value, amortizing minimal-intervention correction, and training the task policy on filtered rollouts. Its feedforward filter requires neither a dynamics model nor an online solve at deployment, and the same loss prioritizes lower predicted harm when its learned acceptance set is empty.

\textbf{Safe control for legged and humanoid robots.} Safety filters for legged robots range from reduced-order barrier controllers to full-order methods \cite{cohen2024safety,nguyen2024gameplay}, with recent humanoid work addressing joint, actuation, and link-level constraints \cite{yang2025shield,chen2025dexterous,muraccioli2026safe,yang2026pac}. Complementing direct action correction, learned reachability supports safe-set estimation and recovery-policy switching \cite{he2024agile,chen2026lambdareachability}, while protective-falling policies mitigate robot damage \cite{meng2026safefall,xu2025unified}. \algo{} addresses both avoidance and harm reduction through one model-free filter, with safety specified by any harm function. When safety is feasible, the task policy learns under the filtered dynamics to target the long-horizon constrained RL problem. Otherwise, the same filter prioritizes lower harm. This unified architecture allows protected-region clearance to take precedence over balance without a fall-away specific reward, a mode switch, or a separate recovery controller.

\section{Simulation Experiments}
\label{sec:exp}

We conduct experiments to answer three research questions. \textbf{(Q1):} Starting from feasible states, how much task return can \ourname{} retain while respecting the safety constraint? \textbf{(Q2):} When harm is unavoidable, can \ourname{} minimize the harm? \textbf{(Q3):} Does \ourname{} scale to a high-dimensional system with whole-body constraints? We compare against four representative ways of enforcing safety in the literature:

\begin{itemize}[\setlength{\labelindent}{0.5\parindent}]
  \item \baselinename{RCRL} \cite{yu2022reachability}, a primal-dual feasibility-aware method that enforces a policy-conditioned safety value in the objective.
  \item \baselinename{RC-CRPO}, our feasibility-aware adaptation of the primal method CRPO \cite{xu2021crpo}, using the safety value.
  \item \baselinename{PPO($\beta$)} \cite{schulman2017proximal}, which subtracts $\beta\relu{\harm}$ from the task reward as in \cref{sec:prelim:lagrangian}.
  \item \baselinename{ROCBF($\alpha$)} \cite{cohen2024safety,yang2025cbf}, applying a CBF to a reduced-order single-integrator model tracked by the full system.
\end{itemize}

\subsection{One-dimensional Double Integrator}
\label{sec:exp:di}
\label{sec:exp:infeas}

We first consider a double integrator (DI) with state $\stt=(p,v)$, where $p\in\R$ is position and $v\in\R$ is velocity, with acceleration control $\act\in[-1,1]$. The harm function $\harm(\stt)=\abs{p}-1$ defines the safe states $\abs{p}\leq1$, while the reward $\rew(\stt)=-\min\{\abs{p-2},\abs{p+2}\}$ is maximized outside that set. Safety and performance therefore conflict.
Maximum braking gives the continuous-time optimal $V_h^\star$, and \Cref{fig:toy1d,fig:feas} shows the set $V_h^\star\leq0$ in \gtset{grey}.
We sample initial states with $p,v\sim\mathcal U[-1,1]$ to include both feasible and infeasible states.
For each method, we train $3$ seeds for $2000$ iterations, and select the safest checkpoint among iterations $1800$, $1900$, and $2000$ for evaluation with $4096$ episodes, breaking ties by higher return.
The validation set is independent of the evaluation set.
The results are shown in \Cref{fig:toy1d-safety-return-matched}, with representative trajectories plotted in \Cref{fig:toy1d}.

\begin{figure}[!t]
  \centering
  \includegraphics[width=\columnwidth]{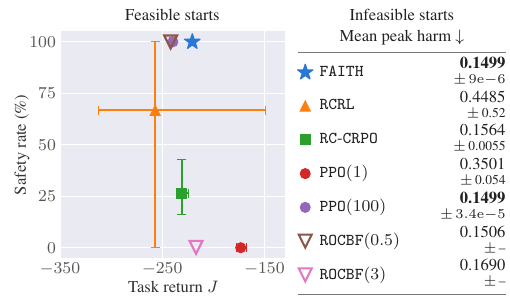}
  \setlength{\abovecaptionskip}{-15pt}
  \caption{
    \textbf{Double integrator results: safety rate and task return of feasible starts, and mean peak harm of infeasible starts.} Left: For feasible starts, \ourname{} achieves a $100\%$ safety rate, and the highest return among all safe methods (error bars show min-max ranges). Right: For infeasible starts, \ourname{} achieves the lowest peak harm ($\pm$ shows the standard deviation).
  }
  \label{fig:toy1d-safety-return-matched}
\end{figure}

\begin{figure*}[!t]
  \centering
  \setlength{\abovecaptionskip}{-5pt}
  \includegraphics[width=\textwidth]{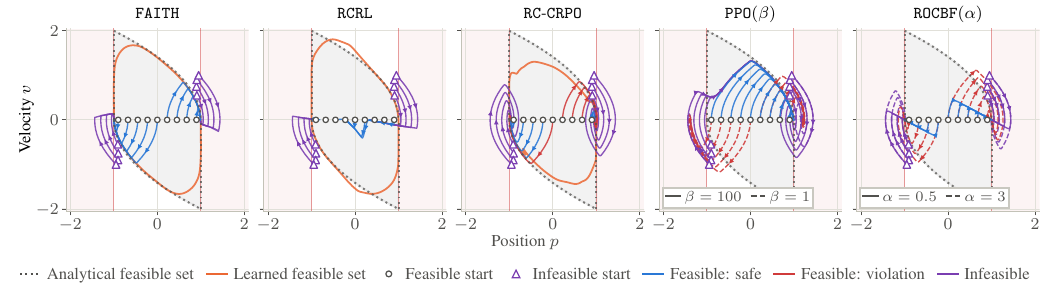}
  \caption{
    \textbf{Trajectories in the double integrator environment.} \ourname{} matches the \gtset{feasible set} closely in visited states. From feasible starts (\textcolor[HTML]{52514E}{$\circ$}), \ourname{} rides the feasibility boundary to achieve the highest return and stays safe. When infeasible (\textcolor[HTML]{7A3EB1}{$\triangle$}), \ourname{} applies maximum braking to minimize peak harm.
  }
  \label{fig:toy1d-trajectories-matched}
  \label{fig:toy1d}
  \label{fig:feas}
\end{figure*}

\textbf{(Q1): Starting from feasible states, \ourname{} maintains $100\%$ safety and achieves the highest return among all safe methods.} \Cref{fig:toy1d} illustrates that \ourname{} achieves this by learning a feasibility boundary close to the analytical feasible set along the \textit{visited} states, and braking near that boundary. \baselinename{RCRL} exhibits unstable performance with a large variance between different seeds, potentially caused by the unstable saddle-point problem. \baselinename{RC-CRPO} crosses the boundary potentially because of the competing policy updates between safety and performance. \baselinename{PPO} exhibits the expected safety-performance trade-off. \baselinename{ROCBF} generates suboptimal trajectories because it ignores the acceleration limit.

\textbf{(Q2): Starting from infeasible states, \ourname{} attains the lowest peak harm.} $350$ states are infeasible in the sampled resets. \Cref{fig:toy1d-safety-return-matched} (right) shows that \ourname{} and \baselinename{PPO($100$)} both attain the lowest mean peak harm. \Cref{fig:toy1d-trajectories-matched} shows that \ourname{} achieves this by braking immediately from these starts.

\subsection{Safety Gym}
\label{sec:exp:car-circle}
To test \ourname{} on a commonly used benchmark, we perform experiments on the SafetyCarCircle1-v0 task in Safety Gym. We train each method for $3$ seeds, select the checkpoints following the same process as \Cref{sec:exp:di}, and report the results in \Cref{fig:car-circle-baselines}. With all initial states being feasible, \ourname{} achieves a $100\%$ safety rate and high mean return, whereas all evaluated baselines incur violations.

\begin{figure}[!t]
  \centering
  \includegraphics[width=\columnwidth]{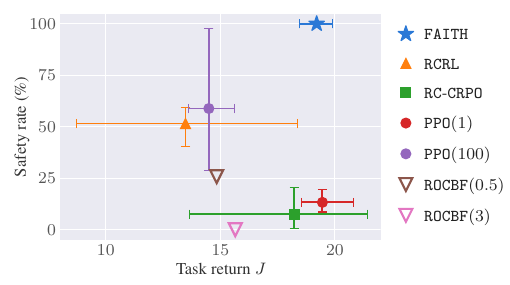}
  \setlength{\abovecaptionskip}{-15pt}
  \caption{\textbf{Car-Circle Results.} With all initial states being feasible, \ourname{} has $100\%$ safety and high task return (error bars show min-max ranges).}
  \label{fig:car-circle-baselines}
\end{figure}

\begin{figure*}[!t]
  \centering
  \includegraphics[width=\textwidth]{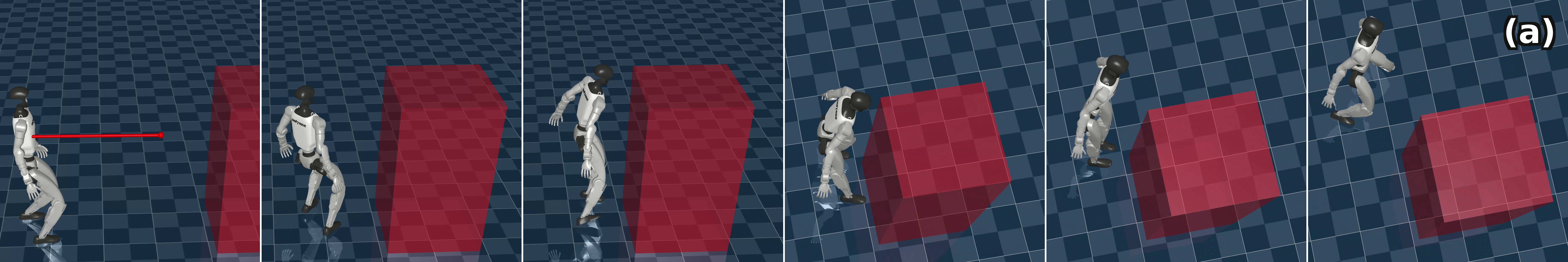}\\[1pt]
  \includegraphics[width=\textwidth]{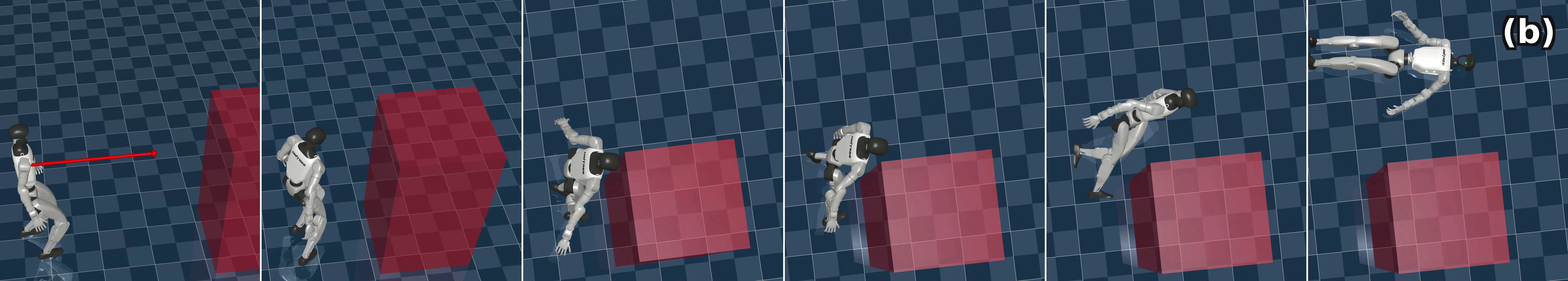}
  \setlength{\abovecaptionskip}{-10pt}
  \caption{\textbf{Push-Avoid simulation.} The red box marks the protected region and the arrow denotes the push. {(a)} The robot remains upright while staying outside the region. {(b)} When the push is too hard, the robot sacrifices balance and falls away while protecting its head and hands.}
  \label{fig:simrollouts}
\end{figure*}

\subsection{$29$-DoF Humanoid Near Protected Regions}
\label{sec:exp:g1}

To test whether \ourname{} scales to high-dimensional systems, we consider a $29$-DoF Unitree G1 humanoid in Isaac Lab \cite{mittal2025isaaclab}, actuated by joint-position targets. We mark planar regions with infinite height as protected, representing, for example, the space occupied by humans\footnote{We consider safety as human protection, so falling is not unsafe.}. We represent the robot's $31$ links by mesh-inflated points and define $\harm(\stt)$ as the largest signed penetration of any point into the protected region. The robot observes the protected region boundaries by an analytical $64$-beam $360^\circ$ planar LiDAR scan.

We design two tasks. \textit{Walking-Avoid}: Track a given velocity command by walking while avoiding protected regions. The robot begins at rest outside the protected regions, so the initial states are feasible by remaining stationary. \textit{Push-Avoid}: On top of Walking-Avoid, the robot is pushed at the start of each episode. Because the ground truth feasible set is unknown here, we evaluate overall safety and separately analyze a \textit{common failure subset}, i.e., initial states where all evaluated methods violate safety, without labeling it physically infeasible.
Both tasks use a reward comprising velocity tracking, foot-slip penalties, and related locomotion terms. We additionally penalize contact forces on the robot, with larger weights on the head and hands and smaller on the torso, to encourage it to protect its most fragile parts when falling \cite{meng2026safefall}.
Over $4096$ trajectories, we report the safety rate, i.e., the fraction of safe trajectories, together with the return, and also the peak penetration on the common failure subset.

We consider two baselines. \baselinename{No Filter} is a pure PPO locomotion policy trained for velocity tracking and push recovery, without safety terms. \baselinename{ROCBF($\cbfAlpha$)} uses \baselinename{No Filter} as a nominal policy but filters its planar velocity command with the reduced-order CBF, which models the base dynamics as a single integrator. We set the body-inflation radius to $r=0.40$\,m and sweep the CBF gain $\cbfAlpha\in\{0,0.1,1,5\}$\footnote{We omit other baselines for their poor performance on a humanoid.}.

\begin{table}[t]
\centering
\caption{
  \textbf{Walking-Avoid.} $\pm$ denotes episode-level standard deviation.
  }
\label{tab:walkavoid}
\footnotesize
\setlength{\tabcolsep}{5pt}
\begin{tabular}{@{}l|cc@{}}
\toprule
Method & Safety \% $\uparrow$ & Return $\uparrow$ \\
\midrule
\baselinename{No Filter}     & 80.37 & $2428 \pm 288$ \\
\baselinename{ROCBF($0$)}    & \textbf{99.95} & $2094 \pm 342$ \\
\baselinename{ROCBF($0.1$)}  & \textbf{99.95} & $2182 \pm 330$ \\
\baselinename{ROCBF($1$)}    & 99.71 & $2362 \pm 307$ \\
\baselinename{ROCBF($5$)}    & 94.80 & $2373 \pm 303$ \\
\ourname{}                   & \textbf{99.95} & $2365 \pm 345$ \\
\bottomrule
\end{tabular}
\end{table}

\begin{table}[t]
\centering
\caption{
  \textbf{Push-Avoid.} Peak harm is evaluated on common failure episodes. $\pm$ denotes episode-level standard deviation.
  }
\label{tab:pushavoid}
\footnotesize
\setlength{\tabcolsep}{2pt}     \begin{tabular}{@{}l|ccc@{}}
\toprule
Method & Safety \% $\uparrow$ & Return $\uparrow$ & $\E\max_k \harm$ (common failure) $\downarrow$ \\
\midrule
\baselinename{No Filter}     & 21.53          & $123.2 \pm 96.0$  & $0.253 \pm 0.149$ \\
\baselinename{ROCBF($0$)}    & 84.52          & $115.5 \pm 64.3$  & $0.172 \pm 0.146$ \\
\baselinename{ROCBF($0.1$)}  & 83.35          & $116.3 \pm 41.3$  & $0.175 \pm 0.140$ \\
\baselinename{ROCBF($1$)}    & 78.56          & $117.1 \pm 87.0$  & $0.165 \pm 0.140$ \\
\baselinename{ROCBF($5$)}    & 60.11          & $115.5 \pm 96.6$  & $0.166 \pm 0.133$ \\
\ourname{}                   & \textbf{85.45} & $\phantom{00}80.0 \pm 104.9$ & $\mathbf{0.153} \pm 0.184$ \\
\bottomrule
\end{tabular}
\end{table}

\textbf{(Q1, Q3): \ourname{} matches the highest safety rate in Walking-Avoid while retaining $97\%$ of unfiltered return, and achieves the highest safety rate in Push-Avoid.}
At matched safety in Walking-Avoid, \ourname{} yields higher return than \baselinename{ROCBF($0$)} and \baselinename{ROCBF($0.1$)} (\cref{tab:walkavoid}). In Push-Avoid, \ourname{}'s higher safety comes with lower return as it sacrifices balance to preserve clearance because it does not trade safety for higher return (\cref{tab:pushavoid,fig:simrollouts}).

\textbf{(Q2, Q3): In the common failure Push-Avoid episodes, \ourname{} obtains the lowest harm.} The same task-policy and filter pair of \ourname{} limits full-body penetration without a fallback controller or explicit mode switch (\cref{tab:pushavoid}).

\subsection{Ablation Studies}

\label{sec:exp:ablation}
\begin{table}[t]
\centering
\caption{\textbf{Double Integrator ablations.}
$\pm$ denotes standard deviation.}
\label{tab:ablation-summary}
\begingroup
\newcommand{\ablcontroller}[1]{\mbox{#1}}
\newcommand{\ablpm}[2]{$#1{\scriptstyle\pm #2}$}
\setlength{\tabcolsep}{0.9pt}
\renewcommand{\arraystretch}{1.0}
\begin{tabular*}{\columnwidth}{@{\extracolsep{\fill}}l|rrr@{}}
\toprule
Controller & \shortstack[r]{Return $J\uparrow$} & \shortstack[r]{Safety (\%) $\uparrow$} & \shortstack[r]{$\mathbb E\max_k h$ $\downarrow$} \\
\midrule
\ablcontroller{\baselinename{Oracle}} & $-219.37$ & $100.00$ & $0.1499$ \\
\ablcontroller{\ourname{}} & \ablpm{-220.64}{0.23} & $100.00$ & \ablpm{0.1499}{9\mathrm{e}{-6}} \\
\ablcontroller{\baselinename{PPO+GTCBF}} & \ablpm{-219.40}{0.01} & $100.00$ & $0.1500$ \\
\ablcontroller{\baselinename{PPO+ROCBF($0.5$)}} & \ablpm{-241.39}{0.01} & $99.95$ & $0.1506$ \\
\ablcontroller{\baselinename{PPO+ROCBF($3$)}} & \ablpm{-187.46}{0.04} & $0.00$ & \ablpm{1.3501}{1.7\mathrm{e}{-4}} \\
\ablcontroller{\baselinename{PPO}\textcolor{MaterialPurple900}{+}\baselinename{FAITH} \textcolor{MaterialPurple900}{filter (fixed)}} & \ablpm{-221.09}{0.22} & \ablpm{99.98}{0.04} & \ablpm{0.1499}{9\mathrm{e}^{-6}} \\
\ablcontroller{\baselinename{FAITH} \textcolor{MaterialPurple900}{w/o filtered dyn.}} & \ablpm{-230.45}{1.71} & \ablpm{97.52}{2.74} & \ablpm{0.1660}{0.02} \\

\ablcontroller{\baselinename{FAITH} \textcolor{MaterialPurple900}{w/o backup replay}} & \ablpm{-221.63}{2.98} & \ablpm{83.87}{27.92} & \ablpm{0.1505}{0.001} \\
\ablcontroller{\baselinename{FAITH} \textcolor{MaterialPurple900}{w/ numerical search}} & \ablpm{-220.65}{0.10} & $100.00$ & \ablpm{0.1500}{1\mathrm{e}{-4}} \\
\ablcontroller{\baselinename{FAITH} \textcolor{MaterialPurple900}{($0.25\lambda_{\max}$)}} & \ablpm{-208.23}{10.46} & \ablpm{0.01}{0.01} & \ablpm{0.3313}{0.11} \\
\ablcontroller{\baselinename{FAITH} \textcolor{MaterialPurple900}{($0.5 \lambda_{\max}$)}} & \ablpm{-220.45}{0.22} & \ablpm{99.99}{0.01} & \ablpm{0.1609}{0.019} \\
\ablcontroller{\baselinename{FAITH} \textcolor{MaterialPurple900}{($2\lambda_{\max}$)}} & \ablpm{-221.42}{0.44} & $100.00$ & \ablpm{0.1499}{7.7\mathrm{e}{-6}} \\
\bottomrule
\end{tabular*}
\endgroup
\end{table}
\begin{table}[t]
\centering
\caption{\textbf{Car-Circle ablations.} $\pm$ denotes standard deviation.}
\label{tab:car-circle-ablation-summary}
\begingroup
\newcommand{\ablcontroller}[1]{\mbox{#1}}
\newcommand{\ablpm}[2]{$#1{\scriptstyle\pm #2}$}
\setlength{\tabcolsep}{0.9pt}
\renewcommand{\arraystretch}{1.0}
\begin{tabular*}{\columnwidth}{@{\extracolsep{\fill}}l|rr@{}}
\toprule
Controller & \shortstack[r]{Return $J\uparrow$} & \shortstack[r]{Safety (\%) $\uparrow$} \\
\midrule
\ablcontroller{\ourname{}} & \ablpm{19.20}{0.71} & $100.00$ \\
\ablcontroller{\baselinename{PPO}\textcolor{MaterialPurple900}{+}\baselinename{FAITH} \textcolor{MaterialPurple900}{filter (fixed)}} & \ablpm{18.46}{1.39} & $100.00$ \\
\ablcontroller{\baselinename{FAITH} \textcolor{MaterialPurple900}{w/o filtered dyn.}} & \ablpm{8.96}{5.21} & \ablpm{98.44}{2.07} \\
\ablcontroller{\baselinename{FAITH} \textcolor{MaterialPurple900}{w/o backup replay}} & \ablpm{18.17}{1.29} & \ablpm{97.66}{3.41} \\
\ablcontroller{\baselinename{FAITH} \textcolor{MaterialPurple900}{w/ numerical search}} & \ablpm{18.84}{0.37} & \ablpm{96.88}{2.07} \\
\ablcontroller{\baselinename{FAITH} \textcolor{MaterialPurple900}{($0.25\lambda_{\max}$)}} & \ablpm{18.98}{0.93} & \ablpm{95.31}{5.63} \\
\ablcontroller{\baselinename{FAITH} \textcolor{MaterialPurple900}{($0.5\lambda_{\max}$)}} & \ablpm{18.50}{1.55} & \ablpm{99.48}{0.90} \\
\ablcontroller{\baselinename{FAITH} \textcolor{MaterialPurple900}{($2\lambda_{\max}$)}} & \ablpm{17.92}{0.88} & $100.00$ \\
\bottomrule
\end{tabular*}
\endgroup
\end{table}

We conduct ablations to study the design choices in the DI (\Cref{tab:ablation-summary}) and the Car-Circle environments (\Cref{tab:car-circle-ablation-summary}).

\begin{figure*}[t]
  \centering
  \includegraphics[width=\textwidth]{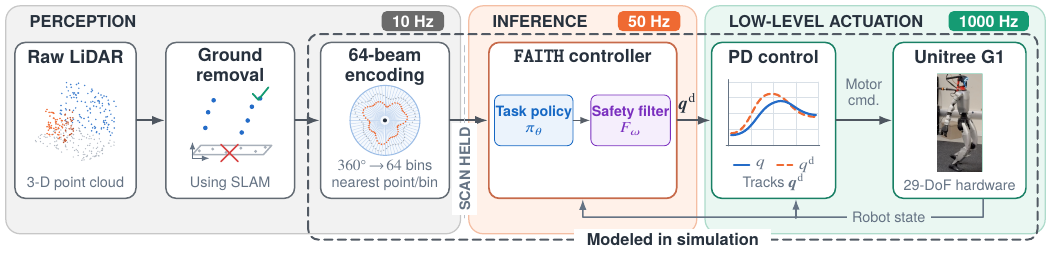}
  \setlength{\abovecaptionskip}{-13pt}
  \caption{\textbf{Hardware control pipeline.} SLAM-based ground removal and angular binning produce a $64$-beam LiDAR observation. \ourname{} generates joint targets tracked by PD control. The dashed box marks simulated components, including G1 dynamics and analytically generated beam inputs.}
  \label{fig:hardware-control}
\end{figure*}

\textbf{The learned filter approaches the analytical reference.}
We introduce $3$ more baselines in DI: \baselinename{Oracle} applies the analytical optimal controller; \baselinename{PPO+GTCBF} trains PPO through a ground-truth CBF; \baselinename{PPO+ROCBF($\alpha$)} trains PPO through a ROCBF. We observe that \ourname{} closely follows the analytical optimal solution produced by \baselinename{Oracle} and \baselinename{PPO+GTCBF}, outperforming training \baselinename{PPO} through \baselinename{ROCBF}.

\textbf{The learned filter is transferable.}
Fixing the filter of \ourname{} and training PPO through it (\baselinename{PPO}\textcolor{MaterialPurple900}{+}\baselinename{FAITH} \textcolor{MaterialPurple900}{filter (fixed)}) performs similarly to \ourname{} in both environments.

\textbf{Filtered-dynamics training improves the performance.}
We remove training through the filter dynamics described in \Cref{sec:method:policy} (\baselinename{FAITH} \textcolor{MaterialPurple900}{w/o filtered dyn.}), and observe that task return degrades because PPO does not adapt to the filter. It also slightly degrades safety potentially because the policy tends to output more unsafe actions that fill the replay buffer, making it more difficult to learn a good filter.

\textbf{Backup-generated replay improves safety.}
Replacing backup rollouts with filtered task-policy rollouts (\baselinename{FAITH} \textcolor{MaterialPurple900}{w/o backup replay}) lowers safety, potentially because $Q_{h,\phi}$ can be OOD without the backup replay (\Cref{sec:method:policy}).

\textbf{The learned filter outperforms numerical search.}
With the policy and critic fixed, \baselinename{FAITH} \textcolor{MaterialPurple900}{w/ numerical search} retains $a_{\mathrm{nom}}$ if $Q_{h,\phi}(s,a_{\mathrm{nom}})\leq-\delta$, and otherwise samples $1025$ actions and chooses the closest to $a_{\mathrm{nom}}$ with $Q_{h,\phi}<-\delta$. If none qualifies, it selects the sample minimizing $Q_{h,\phi}$. It performs similarly to \ourname{} in DI but worse in Car-Circle, and is $18\times$ slower in both environments.

\textbf{Safety requires a sufficiently large $\lambda_\mathrm{max}$.}
Reducing the maximum penalty (\baselinename{FAITH} \textcolor{MaterialPurple900}{($0.25\lambda_\mathrm{max}$)}) degrades safety and increases peak harm. Increasing it (\baselinename{FAITH} \textcolor{MaterialPurple900}{($2\lambda_\mathrm{max}$)}) matches the safety rate, but yields slightly lower return.
The results remain robust to moderate changes (\baselinename{FAITH} \textcolor{MaterialPurple900}{($0.5\lambda_\mathrm{max}$)}).

\section{Hardware Experiments}
\label{sec:hardware}

We conduct qualitative hardware demonstrations on a $29$-DoF Unitree G1, with boxes representing the protected region (\cref{fig:teaser}). In the pipeline (\cref{fig:hardware-control}), $10$\,Hz LiDAR scans undergo SLAM-based ground removal \cite{chen2023direct}. We retain the nearest point in each of $64$ equal-width azimuth bins covering $360^\circ$, matching the simulation observation format. At $50$\,Hz, \ourname{} combines this observation (held between scans) with the current robot state to generate joint-position targets. A $1000$\,Hz PD controller then tracks these targets.

\textbf{The same controller supports upright recovery and falling away from the protected region.} Under a lighter push, the robot recovers balance while avoiding the boxes (\cref{fig:teaser}(a)). Under a stronger push, it falls away onto its legs and torso (\cref{fig:teaser}(b)),  consistent with the head and hand contact penalties used in training (\cref{sec:exp:g1}). 

\section{Conclusions}

We introduced \ourname{}, a feasibility-aware, model-free framework that separates safety enforcement from task-policy optimization. \algo{} approximates the optimal safety critic and trains PPO through a learned filter with only task reward. Safety and return therefore do not compete. Under stated assumptions, filtering preserves the constrained optimum. When the learned feasible set is empty, the filter approaches minimum predicted peak harm.
Simulations demonstrate safety, performance, and harm reduction, while experiments on a humanoid demonstrate hardware execution.

\textbf{Limitations.} Discounting, finite $\lambda$, and learning errors leave the learned controller without formal safety or optimality guarantees. Evaluation is limited to planar protected regions and excludes impact severity and human injury. Hardware demonstrations are qualitative, without safety or harm statistics or quantified effects of LiDAR noise, control latency, and communication delay.

\section*{Acknowledgments}

We thank Nicholas Morozovsky, Yinai Fan, and Shravan Tangudu for proposing the humanoid test cases, and Zeyuan Feng and Nick Cerone for help with hardware experiments.

\bibliographystyle{IEEEtran}
\bibliography{ref}

\end{document}